\newcommand\copyrighttext{%
  \footnotesize This version has been accepted for publication in Proc. of the 2020 European Control Conference (ECC). Personal use of this material is permitted. Permission
from EUCA must be obtained for all other uses, in any current or future media, including reprinting/republishing this material for advertising or promotional
purposes, creating new collective works, for resale or redistribution to servers or lists, or reuse of any copyrighted component of this work in other works.}
\newcommand\copyrightnotice{%
\begin{tikzpicture}[remember picture,overlay]
\node[anchor=south,yshift=10pt] at (current page.south) {\fbox{\parbox{\dimexpr\textwidth-\fboxsep-\fboxrule\relax}{\copyrighttext}}};
\end{tikzpicture}%
}
\title{\LARGE \bf
Simultaneous Perturbation Stochastic Approximation for\\ Few-Shot Learning
}
\author{Andrei Boiarov, Oleg Granichin, Olga Granichina
\thanks{A. Boiarov and O. Granichin are with the Saint Petersburg State University (Faculty of Mathematics and Mechanics and Research Laboratory for Analysis and Modeling of Social Processes), 7-9, Universitetskaya Nab., St. Petersburg, 199034, Russia. O. Granichin is also with and the Institute of Problems in Mechanical Engineering, Russian Academy of Sciences. O. Granichina is with Institute of Childhood, Herzen State Pedagogical University, St. Petersburg, Russia.
 E-mail: {\tt\small a.boiarov@spbu.ru}, {\tt\small o.granichin@spbu.ru}, {\tt\small olga.granichina@mail.ru}.}%
}
\newtheorem{theorem}{Theorem}
\begin{document}

\maketitle
\thispagestyle{empty}
\pagestyle{empty}

\copyrightnotice

\begin{abstract}

Few-shot learning is an important research field of machine learning in which a classifier must be trained in such a way that it can adapt to new classes which are not included in the training set. However, only small amounts of examples of each class are available for training. This is one of the key problems with learning algorithms of this type which leads to the significant uncertainty. We attack this problem via randomized stochastic approximation. In this paper, we suggest to consider the new multi-task loss function and propose the SPSA-like few-shot learning approach based on the prototypical networks method. We provide a theoretical justification and an analysis of experiments for this approach. The results of experiments on the benchmark dataset demonstrate that the proposed method is superior to the original prototypical networks.

\end{abstract}

\section{INTRODUCTION}

Successful operation of many standard machine learning algorithms for supervised learning requires a clear data model, the ability to calculate the gradient for the optimized loss function (quality functional) and a large number of training data that are close to normally distributed~\cite{polyak1987introduction}. However, under real world conditions, these requirements are often not fulfilled: the hypothesis of data centering is not confirmed, and it is impossible to calculate the gradient for the loss function. Therefore, standard universal methods receive a conservative estimate of the desired parameters. Thus, for such cases, it is necessary to develop new methods that can be used under non-standard conditions.

One example of such non-standard conditions is associated with the processing of weakly labeled data (in contrast to standard supervised learning pipelines~\cite{boiarov2019large, boiarov2017arabic}) and arises in the few-shot learning problem that is included in a wider range of meta-learning problems~\cite{finn2017model}. The few-shot learning algorithm should classify a whole dataset with high quality by few examples per class and adapt to new classes not seen during training. One of the promising ideas for improving the quality of such algorithms is the more careful using of the information in the loss function.

Under conditions of substantially noisy observational data, the quality of standard gradient optimization algorithms decreases. Stochastic approximation algorithms with input randomization remain operational in many cases. Therefore, for training few-shot machine learning methods in such conditions, it makes sense to use recurrent adaptive data processing algorithms, among which one often uses approaches based on stochastic approximation (SA).

In this paper we introduce and mathematically prove the SPSA-like few-shot learning approach based on prototypical networks~\cite{snell2017prototypical}. A key new feature of our contribution is a new multi-task loss function. The impact of each task in the considered loss function is optimized via SA. In addition, we show that the proposed method is superior to the original prototypical networks on the benchmark dataset under both difficult and standard conditions.

The paper is organized as follows: Section~\ref{sec:related} provides an overview of the main works related to the topic of this paper. In Section~\ref{sec:problem} we formulate the few-shot learning problem and describe the prototypical networks algorithm. Section~\ref{sec:spsa_fsl} presents our SPSA-like approach for few-shot learning and its mathematical analysis. In Section~\ref{sec:experiments} we provide results of the experiments with our method on the Omniglot dataset~\cite{lake2015human, lake2019omniglot}. Section~\ref{sec:conclusion} concludes the paper.

\section{RELATED WORKS}\label{sec:related}

The SA algorithm was first proposed by Robbins and Monro \cite{robbins1951stochastic} and was developed to solving the optimization problem by Kiefer and Wolfowitz (KW) \cite{kiefer1952stochastic} based on finite difference approximations. Spall \cite{spall1992multivariate} introduced the simultaneous perturbation stochastic approximation (SPSA) algorithm with only two observations at each iteration which recursively generates estimates along random directions. For large dimension $d$ SPSA algorithm has the same order of convergence rate as KW-procedure. Granichin~\cite{granichin1989stochastic, granichin2002randomized} and Polyak and Tsybakov~\cite{polyak1990optimal} proposed similar stochastic approximation algorithms with input randomization that use only one (or two) value of the function under consideration at a point (or points) on a line passing through the previous estimate in a randomly chosen direction. When unknown but bounded disturbance corrupts the observed data, the quality of classical methods based on the stochastic gradient decreases. However, the quality of SPSA-like algorithms remains high \cite{granichin2015randomized}. Stochastic approximation algorithms are successfully used in machine learning, more precisely for solving clustering problems~\cite{boiarov2017simultaneous, boiarov2019stochastic}.


Few-shot learning approaches can be divided into two main groups: metric based and optimization based. The idea of metric based algorithms is to compare the query example that you are trying to classify with the example that you have. This comparison can be trained via Siamese network~\cite{koch2015siamese}, learned metric space~\cite{vinyals2016matching} or prototypical networks~\cite{snell2017prototypical}. The family of optimization based approaches includes methods from~\cite{finn2017model, rusu2018meta, jamal2019task} that learn such initial representation of a deep neural network that can be effectively fine-tuned from a small number of examples. A separate class of few-shot learning algorithms includes methods that use a recurrent neural networks~\cite{santoro2016meta, ravi2016optimization}.

Multi-task learning aims to improve prediction accuracy of one model for each task compared to training a separate model for each task~\cite{ruder2017overview, kendall2018multi}. One of the most important problems of multi-task learning is tuning weights for each task in a loss function. Authors of~\cite{kendall2018multi} solve this problem by deriving a multi-task loss function based on maximizing the Gaussian likelihood with task-dependant uncertainty. 

\section{PROBLEM STATEMENT}\label{sec:problem}

According to the few-shot learning problem formulation we need to train a classifier that can adapt to the recognition of new classes, which are not saw in training, when only a few examples of each of these classes are given. Fig.~\ref{fig:tengwar_1_shot_20_way} presents the example from the Omniglot dataset~\cite{lake2015human}: handwriting characters from one alphabet. Each of the 20 characters at the bottom represents a single class, and the task is to determine to which of these classes does the top one character belong.

\begin{figure}[thpb]
\centering
\includegraphics[scale=0.32]{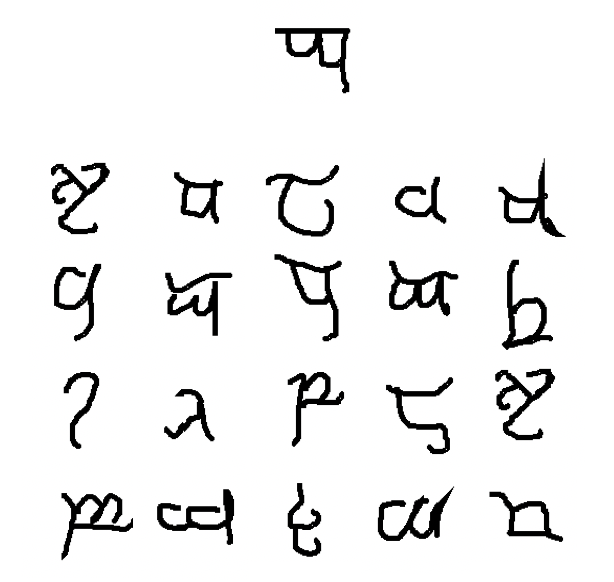}
\caption{Omniglot dataset: 1-shot 20-way classification.}
\label{fig:tengwar_1_shot_20_way}
\end{figure}

Meta-learning pipeline (few-shot learning pipeline in our case) was proposed in~\cite{vinyals2016matching} 
to train the model, capable of solving such a problem. In this pipeline, elements of each training class are divided into {\it support set} and {\it query set}. The support set consists of labeled examples, which are used to predict classes for the unlabeled examples from the query set. Another important feature of the meta-learning pipeline is the method of sampling data for training and testing. Training and testing processes consist of episodes. Each episode $\xi_t$ includes tasks, and each task $t_i$ consists of support and query sets for several classes. Classes in train tasks and test tasks do not overlap. Model training takes place on training episodes, and evaluating on test episodes. This meta-learning (few-shot learning) pipeline is shown in Fig.~\ref{fig:meta_learning_pipeline}.

\begin{figure*}[thpb]
\centering
\includegraphics[width=0.85\textwidth]{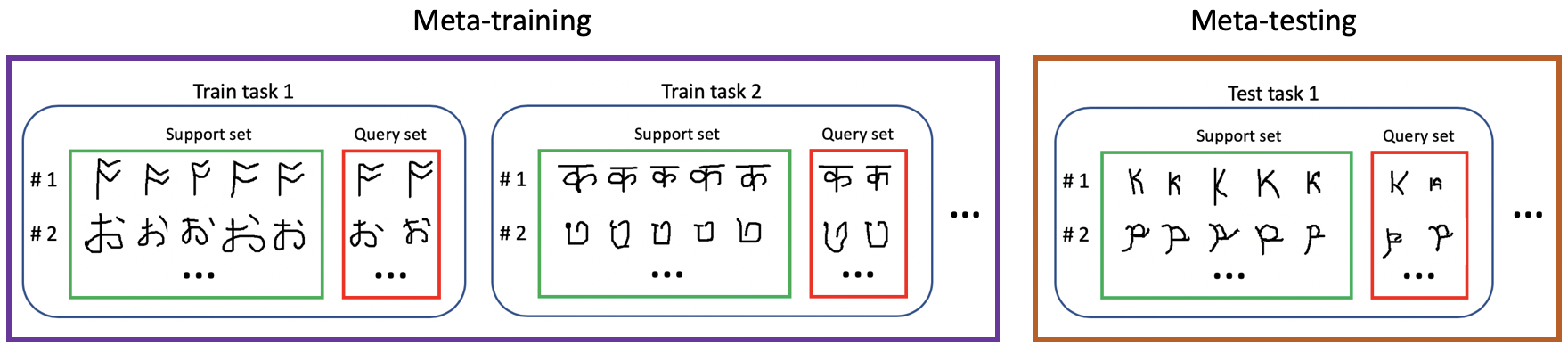}
\caption{Meta-learning (few-shot learning) pipeline with $N_S=5, N_Q=2$.}
\label{fig:meta_learning_pipeline}
\end{figure*}

Let we have $C$ classes and $N$ examples for each class in the set of labeled examples $\left\lbrace (\mathbf{x}_1, y_1),\ldots, (\mathbf{x}_{CN}, y_{CN})  \right\rbrace$ where $\mathbf{x}_i \in \mathbb{R}^d$ is the vector of an example and $y_i \in \left\lbrace 1,\ldots, C \right\rbrace$ is the class label. Let $N_S$ be the number of examples in the support set for each class, $N_Q$ be the number of examples in the query set, $N_S + N_Q = N$; $N_C \leq C$ be the number of classes in a task. For this case the few-shot learning procedure called {\it $N_S$-shot $N_C$-way}. Fig.~\ref{fig:tengwar_1_shot_20_way} represents the example of 1-shot 20-way classification problem. 

Let episode $\xi_t: (t_1, \ldots, t_M)$ consists of $M$ tasks. Each task $t_i$ contains support set $S_{t_i}$ and query set $Q_{t_i}$: ($S_{t_i}, Q_{t_i}$), where $$S_{t_i} = \left\lbrace S_{t_i}^k \right\rbrace_{k=1}^{N_C}, Q_{t_i} = \left\lbrace Q_{t_i}^k \right\rbrace_{k=1}^{N_C}, S_{t_i}^k \cap Q_{t_i}^k = \emptyset.$$ Let sets $$S_{t_i}^k = \left\lbrace x_j | y_j = k \right\rbrace_{j=1}^{N_S} \text{ and } Q_{t_i}^k = \left\lbrace x_j | y_j = k \right\rbrace_{j=1}^{N_Q}$$ be randomly selected for each task from examples of class~$k$. In standard few-shot learning approaches from~\cite{koch2015siamese, santoro2016meta, ravi2016optimization,vinyals2016matching, snell2017prototypical, finn2017model} we have $M=1$. Classes $\left\lbrace k \right\rbrace_1^{N_C}$ in each task are formed by randomly selecting a subset of classes from the training set.

\subsection{Prototypical Networks for Few-shot Learning}

Few-shot learning algorithms can be divided into two groups: optimization based and metric based approaches. One of the most popular methods is the prototypical networks algorithm~\cite{snell2017prototypical} which is a representative of the metric based family. We consider this approach because it is quite effective and can be easily generalized to different types of few-shot learning problems. 

Prototypical networks algorithm addresses the key issue of overfitting during few-shot learning. Like many modern approaches it is based on deep neural networks, through which input embedded into some numerical vector. The main idea is to train such single embedding (prototype) for each class that represents a class and points cluster around this prototype. Classification
is then performed for an embedded query point by simply finding the nearest class prototype.

Let $\phi_\theta(\mathbf{x}): \mathbb{R}^d \to \mathbb{R}^n$ be a convolutional neural network (CNN) with parameters $\theta$. In the prototypical networks method for each class $k$ computes representation $\mathbf{c}^k_{t_i} \in \mathbb{R}^n$ called {\it prototype}. Each prototype is the mean vector of the corresponding support set:
\begin{equation}\label{prototype}
\mathbf{c}^k_{t_i} = \frac{1}{|S_{t_i}^k|} \sum_{\mathbf{x}_j \in S_{t_i}^k} \phi_\theta (\mathbf{x}_j).
\end{equation}

The loss (quality) function for the class $k$ if defined as the negative log-probability that the query example $\mathbf{x}$ is belongs to the class $k$:
\begin{equation}\label{nll}
l_{\theta, t_i}^k (\mathbf{x}) = -\log \frac{\exp(-d(\phi_\theta (\mathbf{x}), \mathbf{c}^k_{t_i}))}{\sum_{k'} \exp(-d(\phi_\theta (\mathbf{x}), \mathbf{c}^{k'}_{t_i}))},
\end{equation}
where $d(\cdot, \cdot)$ is some distance function. We will futher consider the Euclidean distance.

The prototypical networks model is training via stochastic gradient descent (SGD) by minimizing loss function for train task $t_i$
\begin{equation}\label{fsl_loss}
\mathcal{L}_{\theta, t_i} (Q_{t_i}) = \frac{1}{N_C} \sum_{k=1}^{N_C} \frac{1}{N_Q} \sum_{\mathbf{x}_j \in Q_{t_i}^k} l_{\theta, t_i}^k (\mathbf{x}_j).
\end{equation}

In the original prototypical networks algorithm the number of tasks per episode $M=1$, therefore each training episode $\xi_t$ consists of one task $t_1$. The following algorithm presents the procedure of updating parameters $\theta$ of the convolutional neural network for one training episode.

\begin{algorithm}\label{alg:protonet}
\begin{algorithmic}[1]
\caption{Training for episode $\xi_t: (t_1)$}
\renewcommand{\algorithmicrequire}{\textbf{Input:}}
\renewcommand{\algorithmicensure}{\textbf{Output:}}
\REQUIRE $N_S$, $N_Q$, $N_C$
\ENSURE Updated parameters $\theta$
\STATE Random sample $N_C$ classes
\FOR{$k$ in $\left\lbrace 1,\ldots,N_C \right\rbrace$}
\STATE Random sample elements in $S_{t_1}^k$ 
\STATE Random sample elements in $Q_{t_1}^k$ 
\STATE Compute $\mathbf{c}^k_{t_1}$ via~(\ref{prototype}) 
\ENDFOR
\STATE $\mathcal{L}_{\theta, t_1}=0$
\FOR{$k$ in $\left\lbrace 1,\ldots,N_C \right\rbrace$}
\FOR{($\mathbf{x},y)$ in $Q_{t_1}^k$ }
\STATE $\mathcal{L}_{\theta, t_1}=\mathcal{L}_{\theta, t_1} + \frac{1}{N_C N_Q} l_{\theta, t_1}^k (\mathbf{x})$
\ENDFOR
\ENDFOR
\STATE Update parameters $\theta$ via SGD by $\mathcal{L}_{\theta, t_1}$
\end{algorithmic}
\end{algorithm}

\section{SPSA FOR FEW-SHOT LEARNING}\label{sec:spsa_fsl}

Prototypical networks Algorithm~1 as well as other main few-shot learning methods~\cite{koch2015siamese, santoro2016meta, ravi2016optimization,vinyals2016matching, snell2017prototypical, finn2017model} at each training episode use only one task. However, the number of tasks can be limited only by computing capabilities and time. So each episode $\xi_t$ of the few-shot learning pipeline may consist of several tasks $t_1,\ldots,t_M$. On the other hand, multi-task machine learning is a rapidly developing area in recent years and shows many successful results, especially in deep neural networks~\cite{ruder2017overview}. Therefore we will build our modification of the prototypical networks method on the new idea of using multiple tasks simultaneously per training episode.

\subsection{Multi-Task Learning}

There are two main multi-task learning approaches for deep neural networks: soft and hard parameter sharing of hidden layers of neural network~\cite{ruder2017overview}. In our method we use hard parameter sharing for all hidden layers of our convolutional network. This means that we have the same network for all tasks, and the presence of several tasks is reflected only in the loss function. For this purpose we adapted the approach proposed in~\cite{kendall2018multi} which uses task-depended (homoscedastic) uncertainty as a basis for weighting losses in a multi-task
learning problem. In~\cite{kendall2018multi} authors combine multiple regression and classification loss functions for tasks of a pixel-wise classification, an instance semantic segmentation and an estimate of per pixel depth. In the few-shot learning training pipeline tasks are more similar and loss functions have the same structure. Thus our new multi-task few-shot learning loss function with~(\ref{fsl_loss}) has the following form:

\begin{equation}\label{multitask_loss}
f_{\xi_t}(\boldsymbol{\omega}_t, \mathbf{x}) = \sum_{i=1}^M \frac{1}{(\omega_t^i)^2} \mathcal{L}_{\theta, t_i} (Q_{t_i}) + \sum_{i=1}^M \log (\omega_t^i)^2,
\end{equation}
where weights $\boldsymbol{\omega}_t = (\omega_t^1,\ldots,\omega_t^M)$ are hyper-parameters. Tuning of $\boldsymbol{\omega}_t$ is critical to success of multi-task learning. We also consider $M$ as a parameter of our algorithm. 

\subsection{SPSA}

In the proposed approach, deep convolutional neural network $\phi_\theta(\mathbf{x})$ parameters $\theta$ will be modified via SGD as in the prototypical networks algorithm. Instead we focus our attention on the multi-task parameters $\boldsymbol{\omega}_t$ in the loss function~(\ref{multitask_loss}) due to the fact that their optimization plays a key role in the whole learning algorithm. To find these parameters, we formulate the nonstationary optimization problem according to~\cite{vakhitov2009algorithm, granichin2014simultaneous}.

Consider the observation model for the training episode $\xi_t$
\begin{equation*}
L_t(\boldsymbol{\omega}_t) = f_{\xi_t}(\boldsymbol{\omega}_t, \mathbf{x}) + \nu_t,
\end{equation*}
where $\nu_t$ is an additive external noise caused by uncertainties in the calculation of the loss function~(\ref{fsl_loss}) by few examples.

Let $\mathcal{F}_t$ be the $\sigma$-algebra of all probabilistic events which happened up to time instant $t=1,2,\ldots$. Hereinafter $\mathbb{E}_{\mathcal{F}_{t-1}}$ is a symbol of the conditional mathematical expectation with respect to the $\sigma$-algebra $\mathcal{F}_{t-1}$.

Thus, the optimization problem is formulated as an estimation of the point of minimum $\boldsymbol{\omega}_t$ of the function

\begin{equation}\label{non_stationar_opt}
F_t(\boldsymbol{\omega}) = \mathbb{E}_{\mathcal{F}_{t-1}} f_{\xi_t}(\boldsymbol{\omega}, \mathbf{x}) \to \min_{\boldsymbol{\omega}}.
\end{equation}

More precisely, using the observations $L_1, L_2,\ldots,L_t$ and inputs $\mathbf{x}_i$ from training episodes $\xi_1, \xi_2,\ldots,\xi_t$ we need to construct an estimate $\widehat{\boldsymbol{\omega}}_t$ of an unknown vector $\boldsymbol{\omega}_t$ minimizing mean-risk functional~(\ref{non_stationar_opt}).



Consider the case where the data is such that the train tasks $t_i$ are homogeneous, and hence function~(\ref{multitask_loss}) belong to one distribution. For example, the Omniglot dataset satisfies this case. Then we construct the following SPSA based algorithm for finding parameters $\boldsymbol{\omega}_t$.

Let $\Delta_{ n } \in {\mathbb R}^d,\; n=1,2,\ldots$ be vectors consisting of independent random variables with Bernoulli distribution, called the {\it test randomized perturbation}, $\widehat{\boldsymbol{\omega}}_{0}$ is a vector with the initial values of weights, $\boldsymbol{\omega}^\star$ is a some point of minimum of functional~(\ref{non_stationar_opt}), $\{\alpha_n\}$ and $\{\beta_n\}$ are sequences of positive numbers. Then the SPSA few-shot learning algorithm builds the following estimates

\begin{eqnarray}\label{regression_opt}
\begin{cases}
L_n^{\pm} = L_n(\widehat{\boldsymbol{\omega}}_{n-1} \pm \beta_n \Delta_n)
\\
\\
\widehat{\boldsymbol{\omega}}_n = \widehat{\boldsymbol{\omega}}_{n-1} - \alpha_n \Delta_n \frac{L_n^{+} - L_n^{-}}{2 \beta_n}.
\end{cases}
\end{eqnarray}

{\it Assumption~1.} For $n=1,2,\ldots$, the successive differences $\Bar{\nu}_n=\nu_{2n}-\nu_{2n-1}$ of observation noise are bounded: $|\Bar{\nu}_n| \leq c_{\nu} < \infty$, or $\mathbb{E}\Bar{\nu}_n^2 \leq c_{\nu}^2$ if a sequence $\{\nu_t\}$ is random.

{\it Assumption~2.} Let assumptions 3.1--3.3 of the Theorem 3.1 from~\cite{granichin2015randomized} about strong convexity of $F_t$, Lipschitz condition of the gradient of $f_{\xi_t}$, local Lebesgue property and conditions for $\{\alpha_n\}$ and $\{\beta_n\}$ hold.    

For the considered additive external noise, we can suppose that this assumptions is satisfied due to the fact that this noise in~(\ref{fsl_loss}) is generated by support sets $S_{t_i}$ and query sets $Q_{t_i}$, and these sets are bounded for each task $t_i$. 

\begin{theorem}\label{thorem}
Let Asumptions 1, 2 and following conditions hold
\newline
(1) The learning sequence $\mathbf{x}_1, \mathbf{x}_2,\ldots, \mathbf{x}_n,\ldots$ consists of identically distributed independent random vectors;
\newline
(2) $\forall n\geq 1 $ the random vectors
$ \nu_1, \nu_2,  \ldots, \nu_n $ and
$\mathbf{x}_1, \mathbf{x}_2, \ldots, \mathbf{x}_{n-1}$
do not depend on
$ \mathbf{x}_n$ and $\Delta_{n}$,
and the random vector
$\mathbf{x}_n$  does not depend on
$\Delta_n$;
\newline
(3)  $\sum_n \alpha_n=\infty$ and
$\alpha_n\to 0,\;\beta_n \to 0,\;\alpha_n {\beta_n}^{-2} \to 0$ as
$n \to \infty$.

{\bf If} estimate sequence $\{\widehat{\boldsymbol{\omega}}_n\}$ generate by algorithm \eqref{regression_opt}
\newline
{\bf then} $\{\widehat{\boldsymbol{\omega}}_n\}$  converges in the mean-square sense: $
\lim_{n \to \infty}{ \mathbb{E}}\{\|\widehat{\boldsymbol{\omega}}_{n}-\boldsymbol{\omega}^{\star}\|^2\}= 0$.

Furthermore, {\bf if}
$
 \sum_n \alpha^n {\beta^n}^2 +{\alpha^n}^2 {\beta^n}^{-2}< \infty,
$
\newline
{\bf then}
$\widehat{\boldsymbol{\omega}}_n \to \boldsymbol{\omega}^{\star}$ as
$n\to \infty $
with probability $1$.
\end{theorem}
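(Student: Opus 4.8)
The plan is to obtain the statement as a specialization of the general randomized stochastic approximation convergence result (Theorem~3.1 of~\cite{granichin2015randomized}, whose regularity hypotheses are imported by Assumption~2) to the multi-task loss~\eqref{multitask_loss}; the substantive work is to match the recursion~\eqref{regression_opt} to the canonical form and to verify that the probabilistic structure of conditions~(1)--(3) makes the two-measurement SPSA direction conditionally unbiased with controllable second moment. First I would recast~\eqref{regression_opt} as $\widehat{\boldsymbol{\omega}}_n = \widehat{\boldsymbol{\omega}}_{n-1} - \alpha_n \widehat{g}_n$ with the one-direction SPSA estimate $\widehat{g}_n = \Delta_n \frac{L_n^+ - L_n^-}{2\beta_n}$. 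Writing $L_n^{\pm} = f_{\xi_n}(\widehat{\boldsymbol{\omega}}_{n-1} \pm \beta_n\Delta_n,\mathbf{x}_n) + \nu_n^{\pm}$, so that $\nu_n^+ - \nu_n^- = \bar\nu_n$, and Taylor-expanding $f_{\xi_n}$ around $\widehat{\boldsymbol{\omega}}_{n-1}$ using the Lipschitz continuity of its gradient from Assumption~2, I would reach the decomposition
$$\widehat{g}_n = \nabla F(\widehat{\boldsymbol{\omega}}_{n-1}) + \big(\Delta_n\Delta_n^\top - I\big)\nabla F(\widehat{\boldsymbol{\omega}}_{n-1}) + \zeta_n + \Delta_n\frac{\bar\nu_n}{2\beta_n},$$
where $\zeta_n$ collects the finite-difference remainder with $\|\zeta_n\| = O(\beta_n)$ uniformly; condition~(1) makes $F_n \equiv F$ a single strongly convex target, so the minimizer $\boldsymbol{\omega}^\star$ is well defined and does not drift.

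The crux is to show that $\widehat{g}_n$ is in conditional expectation a descent direction for $F$ with a bounded-relative-growth second moment. Since $\Delta_n$ is a symmetric Bernoulli vector with independent $\pm1$ components that, by condition~(2), is independent of $\nu_1,\dots,\nu_n$ and of $\mathbf{x}_n$, we have $\mathbb{E}_{\mathcal{F}_{n-1}}[\Delta_n] = 0$, $\mathbb{E}_{\mathcal{F}_{n-1}}[\Delta_n\Delta_n^\top] = I$, and $\mathbb{E}_{\mathcal{F}_{n-1}}[\Delta_n\bar\nu_n] = \mathbb{E}_{\mathcal{F}_{n-1}}[\Delta_n]\,\mathbb{E}_{\mathcal{F}_{n-1}}[\bar\nu_n] = 0$; as $\nabla F(\widehat{\boldsymbol{\omega}}_{n-1})$ is $\mathcal{F}_{n-1}$-measurable, the second and fourth terms above vanish in conditional mean, leaving $\mathbb{E}_{\mathcal{F}_{n-1}}[\widehat{g}_n] = \nabla F(\widehat{\boldsymbol{\omega}}_{n-1}) + O(\beta_n)$. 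For the second moment, $|\bar\nu_n| \le c_\nu$ (or $\mathbb{E}\bar\nu_n^2 \le c_\nu^2$) from Assumption~1, boundedness of $\|\Delta_n\|$, and the Lipschitz-gradient bound $\|\nabla F(\widehat{\boldsymbol{\omega}}_{n-1})\|^2 \le C(1 + \|\widehat{\boldsymbol{\omega}}_{n-1} - \boldsymbol{\omega}^\star\|^2)$ together give $\mathbb{E}_{\mathcal{F}_{n-1}}\|\widehat{g}_n\|^2 \le C'\big(1 + \|\widehat{\boldsymbol{\omega}}_{n-1} - \boldsymbol{\omega}^\star\|^2\big)\beta_n^{-2}$.

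With these two estimates I would close the argument by a Lyapunov/supermartingale computation on $V_n = \|\widehat{\boldsymbol{\omega}}_n - \boldsymbol{\omega}^\star\|^2$, as in the proof of Theorem~3.1 of~\cite{granichin2015randomized}. Expanding the square, taking $\mathbb{E}_{\mathcal{F}_{n-1}}$, and using strong convexity of $F$ (Assumption~2) in the form $\langle \nabla F(\widehat{\boldsymbol{\omega}}_{n-1}),\widehat{\boldsymbol{\omega}}_{n-1}-\boldsymbol{\omega}^\star\rangle \ge \mu V_{n-1}$ yields
$$\mathbb{E}_{\mathcal{F}_{n-1}} V_n \le \big(1 - 2\mu\alpha_n + o(\alpha_n) + C''\alpha_n^2\beta_n^{-2}\big)V_{n-1} + C''\big(\alpha_n\beta_n^2 + \alpha_n^2\beta_n^{-2}\big).$$
Under condition~(3) the $o(\alpha_n)$ and $\alpha_n^2\beta_n^{-2} = \alpha_n(\alpha_n\beta_n^{-2})$ terms are negligible against $\alpha_n$, the inhomogeneous term divided by $\alpha_n$ equals $\beta_n^2 + \alpha_n\beta_n^{-2} \to 0$, and $\sum_n\alpha_n = \infty$; a standard recursive-inequality lemma then gives $\mathbb{E} V_n \to 0$, the mean-square claim. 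If in addition $\sum_n(\alpha_n\beta_n^2 + \alpha_n^2\beta_n^{-2}) < \infty$, the inhomogeneous terms are summable, so the Robbins--Siegmund lemma applied to the same recursion yields $V_n \to 0$ with probability $1$.

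I expect the main obstacle to be not the stochastic-approximation machinery — which, given that Assumption~2 already bundles the strong convexity, global Lipschitz gradient, local Lebesgue and step-size hypotheses of~\cite{granichin2015randomized}, reduces to careful bookkeeping — but the two points where the specific structure of the problem must genuinely be used: pinning down the exact measurability and independence relations in condition~(2) so that the cross-term $\mathbb{E}_{\mathcal{F}_{n-1}}[\Delta_n\bar\nu_n]$ truly vanishes rather than merely being small, and, at the modeling level, justifying that the softmax-of-Euclidean-distances loss~\eqref{nll}--\eqref{fsl_loss}, hence~\eqref{multitask_loss} and $F$, actually meets those regularity assumptions on the range of $\boldsymbol{\omega}$ visited — this last being precisely the role of the remark that the sets $S_{t_i}$ and $Q_{t_i}$ are bounded.
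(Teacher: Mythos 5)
Your proposal is correct in substance, but it takes a different route from the paper in an instructive way: the paper's own proof is a three-step reduction that simply \emph{checks the hypotheses} of Theorem~3.1 of~\cite{granichin2015randomized} (bounded noise differences simplify the step-size conditions to condition~(3); the structure of~\eqref{nll} and~\eqref{multitask_loss} is asserted to satisfy assumptions 3.1--3.3; the Bernoulli perturbation $\{\Delta_n\}$ satisfies the remaining conditions) and then invokes that theorem as a black box, whereas you essentially \emph{re-derive} that theorem: the decomposition of the two-measurement SPSA direction into gradient, zero-mean perturbation term, $O(\beta_n)$ finite-difference bias, and $\Delta_n\bar\nu_n/(2\beta_n)$ noise term, followed by the Lyapunov recursion on $V_n=\|\widehat{\boldsymbol{\omega}}_n-\boldsymbol{\omega}^\star\|^2$ and Robbins--Siegmund for the almost-sure part. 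The paper's approach buys brevity at the cost of opacity --- in particular, item~2 of its proof ("assumptions 3.1--3.3 are satisfied by the definition of~\eqref{nll} and~\eqref{multitask_loss}") is an assertion, not a verification, and since $u\mapsto \mathcal{L}/u^2+\log u^2$ is convex only in a neighborhood of its minimizer, the strong-convexity hypothesis is at best local; your approach makes explicit exactly where each of conditions~(1)--(3) and Assumptions~1--2 enters (independence for the vanishing of $\mathbb{E}_{\mathcal{F}_{n-1}}[\Delta_n\bar\nu_n]$, stationarity of $F$ from the i.i.d.\ inputs, the $\beta_n^{-2}$ second-moment blow-up that forces $\alpha_n\beta_n^{-2}\to 0$). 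You correctly identify the genuine weak point --- justifying the regularity of the multi-task loss in $\boldsymbol{\omega}$ --- which the paper also does not resolve beyond the boundedness remark preceding the theorem; since Assumption~2 formally postulates these properties, both proofs are conditionally valid, but neither closes that gap.
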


\begin{proof}
\begin{enumerate}
    \item Using the Assumption before this Theorem about bounding of an observation noise we can simplify conditions of the Theorem 3.1  from~\cite{granichin2015randomized} concerning $\{\alpha_n\}$ and $\{\beta_n\}$. As a result, we obtain the conditions of this Theorem.
    \item By the definition of~(\ref{nll}) and as for the sum of such functions in~(\ref{multitask_loss}), assumptions 3.1--3.3 of the Theorem 3.1 from~\cite{granichin2015randomized} are satisfied.
    \item The sequence $\{\Delta_n\}$ we use obviously satisfies the conditions of the Theorem 3.1 (see~\cite{granichin2015randomized}).
\end{enumerate}
\end{proof}

Now we can write our modified algorithm of the procedure of updating parameters $\theta$ of convolutional neural network for one training episode.

\begin{algorithm}\label{alg:spsa_protonet}
\begin{algorithmic}[1]
\caption{Training for episode $\xi_t: (t_1,\ldots,t_M)$}
\renewcommand{\algorithmicrequire}{\textbf{Input:}}
\renewcommand{\algorithmicensure}{\textbf{Output:}}
\REQUIRE $N_S$, $N_Q$, $N_C$, $\widehat{\boldsymbol{\omega}}_{t-1}$
\ENSURE Updated parameters $\theta$
\STATE Random sample $N_C$ classes
\FOR{$i$ in $\left\lbrace 1,\ldots,M \right\rbrace$}
\FOR{$k$ in $\left\lbrace 1,\ldots,N_C \right\rbrace$}
\STATE Random sample elements in $S_{t_i}^k$ 
\STATE Random sample elements in $Q_{t_i}^k$ 
\STATE Compute $\mathbf{c}^k_{t_i}$ via~(\ref{prototype}) 
\ENDFOR
\ENDFOR
\STATE $f_{\xi_t}=0$
\FOR{$i$ in $\left\lbrace 1,\ldots,M \right\rbrace$}
\STATE $\mathcal{L}_{\theta, t_i}=0$
\FOR{$k$ in $\left\lbrace 1,\ldots,N_C \right\rbrace$}
\FOR{($\mathbf{x},y)$ in $Q_{t_i}^k$ }
\STATE $\mathcal{L}_{\theta, t_i}=\mathcal{L}_{\theta, t_i} + \frac{1}{N_C N_Q} l_{\theta, t_i}^k (\mathbf{x})$
\ENDFOR
\ENDFOR
\STATE $f_{\xi_t}= f_{\xi_t} + \frac{1}{(\widehat{\omega}_t^i)^2} \mathcal{L}_{\theta, t_i} + \log (\widehat{\omega}_{t}^i)^2$
\ENDFOR
\STATE Update weights $\widehat{\boldsymbol{\omega}}_t$ via~(\ref{regression_opt})
\STATE Update parameters $\theta$ via SGD by $f_{\xi_t}$
\end{algorithmic}
\end{algorithm}

The inference of our approach during testing is identical to the inference of the prototypical networks approach.

\section{EXPERIMENTS}\label{sec:experiments}

We have experimented on the Omniglot dataset~\cite{lake2015human, lake2019omniglot} with the method for few-shot learning proposed in this paper. This dataset consists of 1623 handwritten characters (classes) collected from 50 alphabets. For each character there are 20 examples written by different people. For training and testing we used resized to $28 \times 28$ grayscale images. Examples from Omniglot are shown in Fig.~\ref{fig:tengwar_1_shot_20_way}. We used splitting of the dataset into 3 parts: for training, validation and testing. Alphabets and consequently classes in these parts do not intersect. Training part consists of 1028 classes from 33 alphabets, validation part consists of 172 classes from 5 alphabets, and testing part consists of 423 classes from 12 alphabets.

\begin{table*}[th]
	\caption{Omniglot, Original within alphabet}
	\label{table:table_omniglot_original_whithin}
	\begin{center}
		\begin{tabular}{p{5 cm}|c|c|c|c}
			\toprule
			Algorithm & 1-shot 20-way & 5-shot 20-way & 1-shot 5-way & 5-shot 5-way\\
			\midrule
			Prototypical Networks (ours realization) & 73.44 $\pm$ 0.6 \% & 87.2 $\pm$ 0.5 \% & 86.33 $\pm$ 0.6 \% & 94.94 $\pm$ 0.4 \% \\
			\midrule
			Multi-task pretrain, $M=3$, equal $\omega_t^i$ & 74 $\pm$ 0.61 \% & 86.84 $\pm$ 0.42 \% & 84.45 $\pm$ 0.7 \% & 94.06 $\pm$ 0.39 \% \\
			\hline
			Multi-task pretrain, $M=3$, random $\omega_t^i$ & 73.52 $\pm$ 0.6 \% & 86.91 $\pm$ 0.39 \% & 84.49 $\pm$ 0.7 \% & 94.32 $\pm$ 0.37 \% \\
			\midrule
			Multi-task pretrain, $M=3$, SPSA $\omega_t^i$ & {\bfseries 75.24 $\pm$ 0.59 \% } & {\bfseries 87.38 $\pm$ 0.4 \%} & {\bfseries 87.11 $\pm$ 0.7 \%} & {\bfseries 95.20 $\pm$ 0.4 \%} \\
			\hline
			Multi-task pretrain, $M=20$, SPSA $\omega_t^i$, Task sampling, $M_{top}=3$ & {\bfseries 74.33 $\pm$ 0.6 \%} & {\bfseries 87.48 $\pm$ 0.4 \%} & {\bfseries 86.67 $\pm$ 0.7 \%} & 94.93 $\pm$ 0.4 \% \\
			\hline
			Multi-task pretrain, $M=20$, SPSA $\omega_t^i$, Task sampling, $M_{top}=15$ & {\bfseries 74.20 $\pm$ 0.6 \%} & 87.34 $\pm$ 0.4 \% & {\bfseries 86.83 $\pm$ 0.7 \%} & {\bfseries 95.51 $\pm$ 0.4 \%} \\
			\hline
			Multi-task pretrain, $M=10$, SPSA $\omega_t^i$ & {\bfseries 73.91 $\pm$ 0.6 \% } & 87.24 $\pm$ 0.7 \% & {\bfseries 87.15 $\pm$ 0.7 \%} & {\bfseries 95.17 $\pm$ 0.4 \%} \\
			\hline
			Multi-task, $M=10$, SPSA $\omega_t^i$ & 69.95 $\pm$ 0.66 \%  & {\bfseries 88.14 $\pm$ 0.7 \%} & {\bfseries 88.8 $\pm$ 0.7 \%} & {\bfseries 96.4 $\pm$ 0.3 \%} \\
			\hline
			Multi-task, $M=15$, SPSA $\omega_t^i$ & 69.63 $\pm$ 0.66 \%  & {\bfseries 88.12 $\pm$ 0.7 \%} & {\bfseries 88.86 $\pm$ 0.7 \%} & {\bfseries 96.12 $\pm$ 0.3 \%} \\
			\bottomrule
		\end{tabular}
	\end{center}
\end{table*}

Most few-shot learning papers~\cite{vinyals2016matching, santoro2016meta, snell2017prototypical, finn2017model, jamal2019task} describing experiments on Omniglot use the version from~\cite{vinyals2016matching}, in which the character classes are augmented with rotations in multiples of 90 degrees. This gives 6492 classes form 50 alphabets. Hence the number of classes in training, validation, test splitting also increases 4 times.

In~\cite{lake2019omniglot} it is claimed that although this augmented version from~\cite{vinyals2016matching} contributed a lot to the development of few-shot learning methods, it does not solve the original problem posed in~\cite{lake2015human}. More precisely in~\cite{lake2015human} is considered the problem of classification by 1 example between 20 classes (1-shot, 20-way) within one alphabet (see Fig.~\ref{fig:tengwar_1_shot_20_way}). Statement ``within alphabet'' means that in the each task $t_i$ characters (classes) belong to the same alphabet. This type of experiment is called in~\cite{lake2019omniglot} {\it Omniglot, Original within alphabet}. Setting of this type is more difficult and standard few-shot learning algorithms, including prototypical networks, significantly drop their accuracy. Therefore, we focused on this Omniglot setting. Type described in~\cite{vinyals2016matching} is called {\it Omniglot, Augmented between alphabet}. Statement ``between alphabet'' means that in the each task $t_i$ characters (classes) may belong to different alphabets. We also tested our algorithm on this type of setting.

For our experiments we used the same deep convolutional neural network as in~\cite{snell2017prototypical}. This CNN is composed of four
convolutional blocks. Each block consists of a 64-filter $3 \times 3$ convolution, batch normalization layer, a ReLU (rectified linear unit) nonlinearity and a $2 \times 2$ max-pooling layer.

We have trained all our models during 50 epochs, where 1 epoch includes 100 random training episodes. For training parameters $\theta$ of the CNN was used SGD with Adam as in~\cite{snell2017prototypical}. Initial learning rate was $10^{-3}$, and the learning rate was cut in half every 2000 episodes. Parameters of the SPSA few-shot learning: $\gamma=1 / 6, \alpha^n=0.25 / n^{\gamma}, \beta^n=15 / n^{\frac{\gamma}{4}}$. These parameters were selected according to the theoretical results from~\cite{granichin2015randomized} and remained unchanged for all experiments.

We have experimented with several additional features for training our model. One of them is {\it pretraining}: first 20 epochs CNN was trained via vanilla prototypical networks (Algorithm 1), then 30 epochs it was trained by SPSA few-shot learning (Algorithm 2). 

Another feature is {\it task sampling}: 30 tasks are randomly selected, and then $M_{top}$ tasks are selected among them. The idea is to select the most different tasks for the training episode $\xi_t$. Each task $t$ is described by a set of prototypes $\{\mathbf{c}^k_t\}_{k=1}^{N_C}$. Then the differences between the two tasks $t_1$ and $t_2$ is calculated as $d(t_1, t_2)=\max_{k \in \{1,\ldots,N_C\}} \|\mathbf{c}^k_{t_1} - \mathbf{c}^k_{t_2}\|^2.$

We computed classification accuracy for our models averaged over 1000 randomly generated episodes from the test set and reported it with $95 \%$ confidence intervals. We also reported results for the original prototypical networks algorithm and for our method without SPSA weights optimization but with equal and random weights. These results are presented in Table~\ref{table:table_omniglot_original_whithin} and Table~\ref{table:table_omniglot_augment_between}.

\begin{table*}[h]
	\caption{Omniglot, Augmented between alphabet}
	\label{table:table_omniglot_augment_between}
	\begin{center}
		\begin{tabular}{p{5 cm}|c|c|c|c}
			\toprule
			Algorithm & 1-shot 20-way & 5-shot 20-way & 1-shot 5-way & 5-shot 5-way\\
			\midrule
			Prototypical Networks (ours realization) & 94.85 $\pm$ 0.18 \% & 98.62 $\pm$ 0.06 \% & 98.44 $\pm$ 0.17 \% & 99.56 $\pm$ 0.07 \% \\
			\midrule
			Multi-task pretrain, $M=3$, equal $\omega_t^i$ & 94.6 $\pm$ 0.17 \% & 98.54 $\pm$ 0.06 \% & 98.29 $\pm$ 0.19 \% & 99.57 $\pm$ 0.06 \% \\
			\hline
			Multi-task pretrain, $M=3$, random $\omega_t^i$ & 94.58 $\pm$ 0.17 \% & 98.53 $\pm$ 0.06 \% & 98.27 $\pm$ 0.19 \% & 99.56 $\pm$ 0.07 \% \\
			\midrule
			Multi-task pretrain, $M=3$, SPSA $\omega_t^i$ & {\bfseries 95.14 $\pm$ 0.18 \% } & {\bfseries 98.72 $\pm$ 0.06 \% } & {\bfseries 98.55 $\pm$ 0.17 \% } & 99.55 $\pm$ 0.07 \% \\
			\hline
			Multi-task pretrain, $M=20$, SPSA $\omega_t^i$, Task sampling, $M_{top}=3$ & {\bfseries 94.94 $\pm$ 0.16 \% } & {\bfseries 98.68 $\pm$ 0.06 \% } & {\bfseries 98.49 $\pm$ 0.18 \% } & {\bfseries 99.65 $\pm$ 0.07 \% } \\
			\hline
			Multi-task pretrain, $M=20$, SPSA $\omega_t^i$, Task sampling, $M_{top}=15$ & 94.45 $\pm$ 0.17 \% & 98.45 $\pm$ 0.06 \% & 98.27 $\pm$ 0.19 \% & {\bfseries 99.58 $\pm$ 0.07 \% } \\
			\hline
			Multi-task pretrain, $M=10$, SPSA $\omega_t^i$ & {\bfseries 95.24 $\pm$ 0.16 \% } & 98.43 $\pm$ 0.06 \% & 98.29 $\pm$ 0.19 \% & {\bfseries 99.56 $\pm$ 0.06 \% } \\
			\hline
			Multi-task, $M=10$, SPSA $\omega_t^i$ & 94.73 $\pm$ 0.16 \% & 98.46 $\pm$ 0.06 \% & 98.35 $\pm$ 0.19 \% & {\bfseries 99.56 $\pm$ 0.06 \% } \\
			\hline
			Multi-task, $M=15$, SPSA $\omega_t^i$ & {\bfseries 94.86 $\pm$ 0.17 \% } & 98.57 $\pm$ 0.07 \% & 98.4 $\pm$ 0.18 \% & {\bfseries 99.58 $\pm$ 0.05 \% } \\
			\bottomrule
		\end{tabular}
	\end{center}
\end{table*}

\begin{figure}[thpb]
\centering
\subfloat[Omniglot, Original within alphabet.]{%
  \includegraphics[clip,width=0.62\columnwidth]{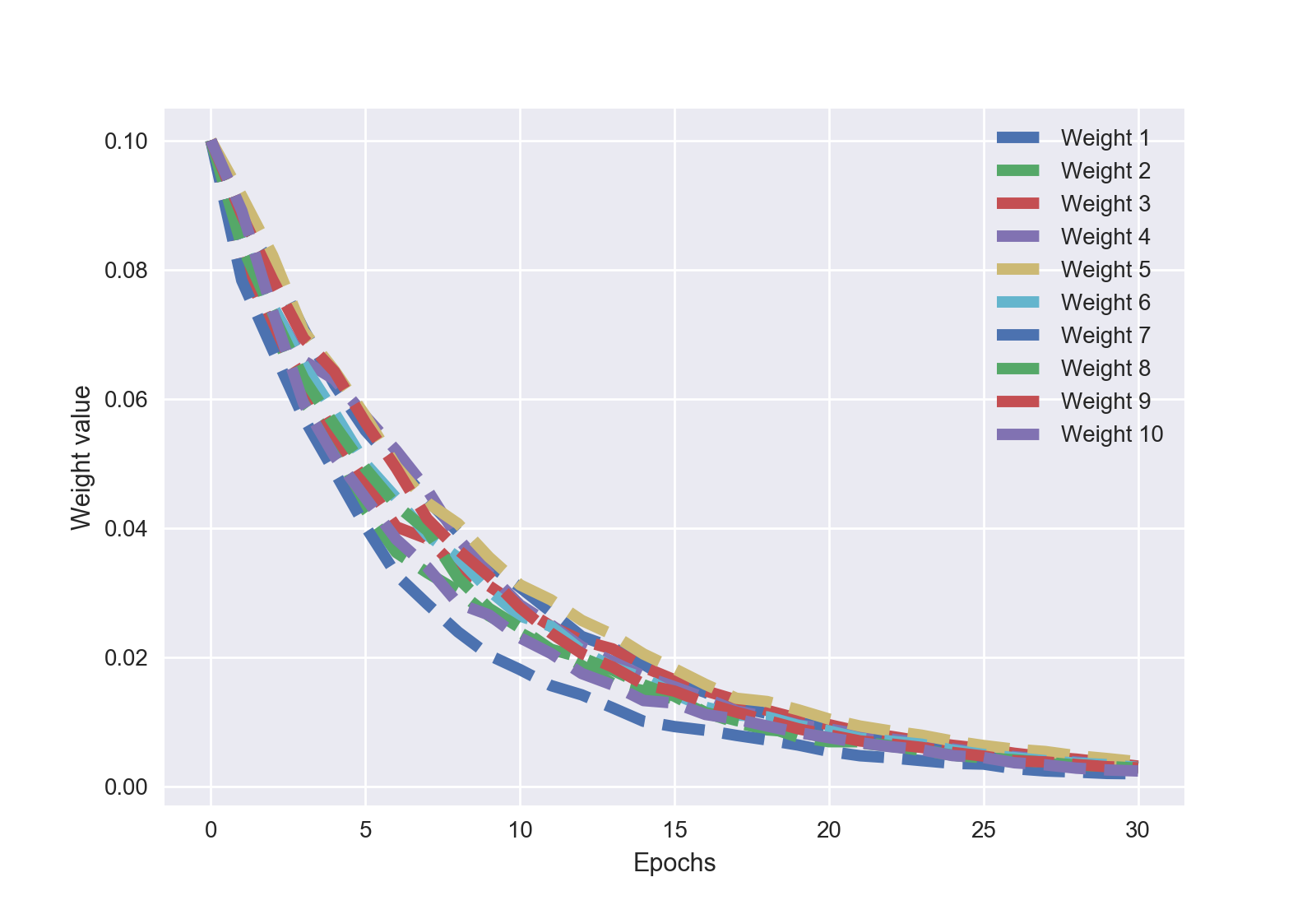}%
}

\subfloat[Omniglot, Augmented between alphabet.]{%
  \includegraphics[clip,width=0.62\columnwidth]{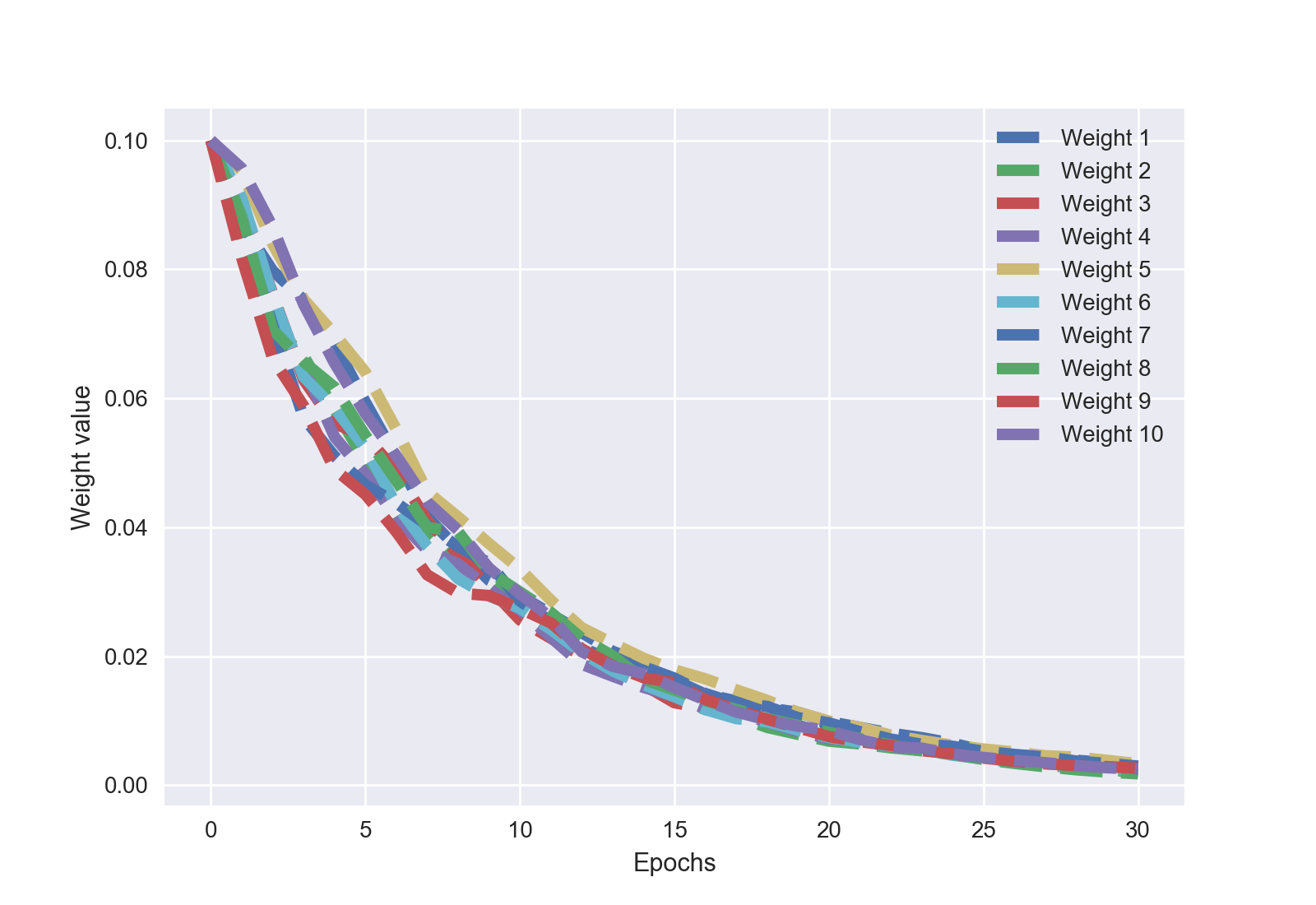}%
}

\caption{SPSA $w_t^i, i=1,\ldots,10$ values by learning epochs, 1-shot 20-way.}
\label{fig:weights_1_20}
\end{figure}




As can be seen from the results of experiments, proposed in this paper approach significantly outperform prototypical networks in the original within alphabet Omniglot setting, on which our attention was focused. Method with pretraining and $M=3$ demonstrates best average result and best result for the 20-way few-shot classification. Methods without pretraining and with larger $M=10$ and $M=15$ give best result for the 5-way classification problem.

In the augmented between alphabet setting our method also surpass or not inferior to the original algorithm. It is important to note that experiments with multi-task but without SPSA weights optimization demonstrate significantly worse accuracy. This fact illustrates an importance of the proposed SPSA based algorithm. 

Consider the behaviour during training of the weights $\boldsymbol{\omega}_t$ with $M=10$ (Multi-task, $M=10$, SPSA $\omega_t^i$ in Tables~\ref{table:table_omniglot_original_whithin} and~\ref{table:table_omniglot_augment_between}). Fig.~\ref{fig:weights_1_20} shows the values of weights depending on the training epoch for the 1-shot 20-way classification problem for both Omniglot settings. As can bee seen from this Figure, the weights gradually converge to small values, which indicates an increase in the contribution of tasks to the last epochs of training. 


\section{CONCLUSIONS}\label{sec:conclusion}

In this paper we described and gave a theoretical justification of the SPSA-like multi-task modification of the prototypical networks algorithm for few-shot learning. The proposed approach outperforms original method in several settings of the standard Omniglot tests. In future works, we plan to extend and combine this approach with other main few-shot learning algorithms and also try to use it to study clusters in graphs, in particular, for modeling social processes and phenomena. In addition, we plan to combine the described method with a promising projective optimization approach~\cite{senov2017accelerating}.

\addtolength{\textheight}{-12cm}   





\bibliographystyle{IEEEtran}
\bibliography{IEEEabrv,biblio}

\end{document}